\documentclass[10pt,twocolumn,letterpaper]{article}

\usepackage{cvpr}
\usepackage{times}
\usepackage{epsfig}
\usepackage{graphicx}
\usepackage{amsmath}
\usepackage{amssymb}

\usepackage{algorithm}
\usepackage{algorithmic}
\usepackage{bm}
\usepackage{amsthm}
\usepackage{multirow, paralist}

\def \R {\mathbb{R}}

\def \P {\mathcal{P}}
\def \x {\mathbf{x}}

\def \X {\mathbf{X}}

\def \z {\mathbf{z}}

\def \p {\mathbf{p}}
\def \q {\mathbf{q}}

\def \D {\mathcal{D}}

\def \LL {\mathcal{L}}

\newtheorem{thm}{Theorem}
\newtheorem{prop}{Proposition}

\newtheorem{definition}{Definition}

\usepackage[breaklinks=true,bookmarks=false]{hyperref}

\cvprfinalcopy 


\ifcvprfinal\pagestyle{empty}\fi
\begin{document}

\title{Hierarchically Robust Representation Learning}

\author{Qi Qian$^1$\quad Juhua Hu$^2$ \quad Hao Li$^1$\\
$^1$Alibaba Group\\
$^2$School of Engineering and Technology\\
University of Washington, Tacoma, USA\\
{\tt\small \{qi.qian, lihao.lh\}@alibaba-inc.com\quad juhuah@uw.edu}
}

\maketitle
\thispagestyle{empty}

\begin{abstract}
With the tremendous success of deep learning in visual tasks, the representations extracted from intermediate layers of learned models, that is, deep features, attract much attention of researchers. Previous empirical analysis shows that those features can contain appropriate semantic information. Therefore, with a model trained on a large-scale benchmark data set (e.g., ImageNet), the extracted features can work well on other tasks. In this work, we investigate this phenomenon and demonstrate that deep features can be suboptimal due to the fact that they are learned by minimizing the empirical risk. When the data distribution of the target task is different from that of the benchmark data set, the performance of deep features can degrade. Hence, we propose a hierarchically robust optimization method to learn more generic features. Considering the example-level and concept-level robustness simultaneously, we formulate the problem as a distributionally robust optimization problem with Wasserstein ambiguity set constraints, and an efficient algorithm with the conventional training pipeline is proposed. Experiments on benchmark data sets demonstrate the effectiveness of the robust deep representations.
\end{abstract}

\section{Introduction}
\label{sec:intro}
Extracting appropriate representations is essential for visual recognition. In the past decades, various hand-crafted features have been developed to capture semantics of images, e.g., SIFT~\cite{Lowe04}, HOG~\cite{DalalT05}, etc. The conventional pipeline works in two phases. In the first phase, representations are extracted from each image with a given schema. Thereafter, a specific model (e.g., SVM~\cite{CortesV95}) is learned with these features for a target task. Since the hand-crafted features are task-independent, the performance of this pipeline can be suboptimal.

\begin{figure}[!ht]
\centering
\includegraphics[width=3.2in]{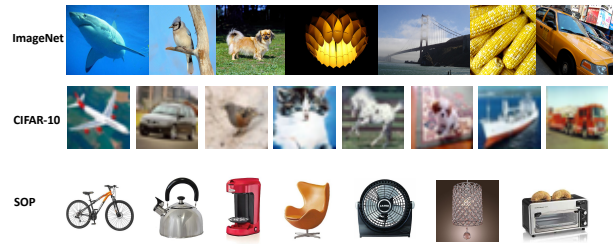}
\caption{Examples from ImageNet, CIFAR-10 and SOP. Example-level distribution difference within a class can be observed from the 7th image of ImageNet and 2nd image of CIFAR-10 for the car class in various aspects, e.g., resolution and pose. Concept-level distribution difference is significant between ImageNet and SOP. ImageNet includes many classes from the concept ``animal'' while SOP only contains classes from ``artifact''.}\label{fig:example}
\end{figure}

Deep learning proposes to incorporate these phases by training end-to-end convolutional neural networks. Without an explicit feature design like SIFT~\cite{Lowe04}, a task-dependent representation will be learned through multiple layers and a fully connected layer is attached at the end as a linear classifier for recognition. Benefited from this coherent structure, deep learning promotes the performance on visual tasks dramatically, e.g., categorization~\cite{KrizhevskySH12}, detection~\cite{RenHG017}, etc. Despite the success of deep learning on large-scale data sets, deep neural networks (DNNs) are easy to overfit small data sets due to the large number of parameters. Besides, DNNs require GPU for efficient training, which is expensive.

Researchers attempt to leverage pre-trained DNNs to improve the feature design mechanism. Surprisingly, it is observed that the features extracted from the last few layers perform well on the generic tasks when the model is pre-trained on a large-scale benchmark data set, e.g., ImageNet~\cite{ILSVRC15}. Deep features, which are outputs from intermediate layers of a deep model, become popular as the substitute of training deep models for light computation. Systematic comparison shows that these deep features outperform the existing hand-crafted features with a large margin~\cite{DonahueJVHZTD14,mormont2018comparison,QianJZL15}. 


The objective of learning deep models for specific tasks and deep features for generic tasks can be different, but little efforts have been devoted to further investigating deep features. When learning deep models, it focuses on optimizing the performance on the current training data set. In contrast, deep features should be learned for generic tasks rather than a single data set. In the applications of deep features, it is also noticed that the deep features can fail when the data distribution in a generic task is different from the benchmark data set~\cite{ZhouLXTO14}. By studying the objective of learning models for a given task, we find that it is a standard empirical risk minimization (ERM) problem that is optimized on the uniform distribution over examples. It is well known that the models obtained by ERM can generalize well on the data from the same distribution as training~\cite{BousquetE02}.

However, the data distribution from real applications can be significantly different from a benchmark data set, which can result in the performance degeneration when adopting the representations learned from ERM. The differences can come from at least two aspects. First, the distribution of examples in each class can be different between the generic task and the benchmark data set, which is referred as example-level distribution difference in this paper. Taking the 7th image of ImageNet and 2nd of CIFAR-10 in Fig.~\ref{fig:example} as an example, they are of different resolutions and poses while they are both from the car class. This problem attracts much attention recently and some approaches to optimize the worst-case performance are developed to handle this issue~\cite{ChenLSS17,NamkoongD16,sinha2018certifiable}. Second, the distribution of concepts in an application is also different from that in the benchmark data set. It should be noted that each concept here can contain multiple classes, e.g., bulldog, beagle and so on under the concept ``dog''. This concept-level distribution difference has been less investigated but more crucial for deploying deep features due to the fact that the concepts in real applications may be only a subset of or partially overlapped by those in the benchmark data set. For instance, the concepts in SOP is quite different from those covered in ImageNet as shown in Fig.~\ref{fig:example}.


In this work, we propose to consider the difference in examples and that in concepts simultaneously and learn hierarchically robust representations from DNNs. Compared with ERM, our algorithm is more consistent with the objective of learning generic deep features. For the example-level robustness, we adopt Wasserstein ambiguity set~\cite{sinha2018certifiable} to encode the uncertainty from examples for the efficient optimization. Our theoretical analysis also illustrates that an appropriate augmentation can be better than the regularization in training DNNs, since the former one provides a tighter approximation for the optimization problem. For the concept-level robustness, we formulate it as a game between the deep model and the distribution over different concepts to optimize the worst-case performance over concepts. By learning deep features with the adversarial distribution, the worst-case performance over concepts can be improved. Finally, to keep the simplicity of the training pipeline, we develop an algorithm that leverages the standard random sampling strategy at each iteration and re-weights the obtained gradient for an unbiased estimation. This step may increase the variance of the gradient and we reduce the variance by setting the learning rate elaborately. We show that the adversarial distribution can converge at the rate of $\mathcal{O}(\log(T)/T)$, where $T$ denotes the total number of iterations. We employ ImageNet as a benchmark data set for learning deep features and the empirical study on real-world data sets confirms the effectiveness of our method.

The rest of this paper is organized as follows. Section \ref{sec:related} reviews the related work. Section \ref{sec:method} introduces the proposed method. Section \ref{sec:exp} conducts the experiments on the benchmark data sets and Section \ref{sec:conclude} concludes this work with future directions.

\section{Related Work}\label{sec:related}
\noindent\textbf{Deep Features:} Deep learning becomes popular since ImageNet ILSVRC12 and various architectures of DNNs have been proposed, e.g., AlexNet~\cite{KrizhevskySH12}, VGG~\cite{Simonyan14c}, GoogLeNet~\cite{SzegedyLJSRAEVR15}, and ResNet~\cite{HeZRS16}. Besides the success on image categorization, features extracted from the last few layers are applied for generic tasks. \cite{DonahueJVHZTD14} adopts the deep features from the last two layers in AlexNet and shows the impressive performance on visual recognition with different applications. After that, \cite{QianJZL15} applies deep features for distance metric learning and achieves the overwhelming performance to the hand-crafted features on fine-grained visual categorization. \cite{mormont2018comparison} compares deep features from different neural networks and ResNet shows the best results. Besides the model pre-trained on ImageNet, \cite{ZhouLXTO14} proposes to learn deep features with a large-scale scene data set to improve the performance on the scene recognition task. All of these work directly extract features from the model learned with ERM as the objective. In contrast, we develop an algorithm that is tailored to learn robust deep representations. Note that deep features can be extracted from multiple layers of deep models and we focus on the layer before the final fully-connected layer in this work. 

\noindent\textbf{Robust Optimization:} Recently, distributionally robust optimization that aims to optimize the worst-case performance has attracted much attention~\cite{ChenLSS17,NamkoongD16,sinha2018certifiable}. \cite{NamkoongD16} proposes to optimize the performance with worst-case distribution over examples that is derived from the empirical distribution. \cite{ChenLSS17} extends the problem to a non-convex loss function, but they require a near-optimal oracle for the non-convex problem to learn the robust model. \cite{sinha2018certifiable} introduces the adversarial perturbation on each example for robustness. Most of these algorithms only consider the example-level robustness. In contrast, we propose the hierarchically robust optimization that considers the example-level and concept-level robustness simultaneously, to learn the generic deep representations for real applications.

\section{Hierarchical Robustness}
\label{sec:method}
\subsection{Problem Formulation}
Let $\x_i$ denote an image and $y_i\in\{1,\dots,C\}$ be its corresponding label for a $C$-class classification problem. Given a benchmark data set $\{\x_i, y_i\}$ where $i=1,\ldots, N$, the parameter $\theta$ in a deep neural network can be learned by solving the optimization problem as
\begin{eqnarray}\label{eq:erm}
\min_{\theta}\frac{1}{N}\sum_i \ell(\x_i,y_i;\theta)
\end{eqnarray}
where $\ell(\cdot)$ is a non-negative loss function (e.g., cross entropy loss). By decomposing the parameter $\theta$ as $\theta=\{\delta,\omega\}$, where $\omega$ denotes the parameter of the final fully-connected layer and $\delta$ denotes the parameter from other layers and can be considered as for a feature extraction function $f(\cdot)$, we can rewrite the original problem as
\[\min_{\theta}\frac{1}{N}\sum_i \ell(f(\x_i),y_i;\omega)\] 
Considering that $\omega$ is for a linear classifier, which is consistent to the classifiers applied in real-world applications (e.g., SVM), the decomposition shows that the problem of learning generic deep features $f(\x)$ can be addressed by learning a robust deep model on the benchmark data set.

The original problem in Eqn.~\ref{eq:erm} is an empirical risk minimization (ERM) problem that can be inappropriate for learning generic representations. In the following, we explore the hierarchical robustness to obtain robust deep representations for generic tasks.

First, we consider the example-level robustness. Unlike ERM, a robust model is to minimize the loss with the worst-case distribution derived from the empirical distribution. The optimization problem can be cast as a game between the prediction model and the adversarial distribution
\[\min_{\theta} \max_i\{\ell(\x_i,y_i;\theta)\}\]
which is equivalent to
\[\min_{\theta} \max_{\p\in\R^{N};\p\in\Delta} \sum_i p_i \ell(\x_i,y_i;\theta)\]
where $\p$ is the adversarial distribution over training examples and $\Delta$ is the simplex as $\Delta = \{\p|\sum_i p_i=1,\forall i, p_i\geq 0\}$. When $\p$ is a uniform distribution, the distributioanlly robust optimization becomes ERM. 

Without any constraints, the adversarial distribution is sensitive to the outlier and can be arbitrarily far way from the empirical distribution, which has large variance from the selected examples. Therefore, we introduce a regularizer to constrain the space of the adversarial distribution, which provides a trade-off between the bias (i.e., to the empirical distribution) and variance for the adversarial distribution. The problem can be written as
\begin{eqnarray}\label{eq:expprob}
\min_{\theta} \max_{\p\in\R^{N};\p\in\Delta} \sum_i p_i \ell(\x_i,y_i;\theta) - \lambda_e\D(\p||\p_0)
\end{eqnarray}
where $\p_0$ is the empirical distribution. $\D(\cdot)$ measures the distance between the learned adversarial distribution and the empirical distribution. We apply squared $L_2$ distance in this work as $\D(\p||\p_0)=\|\p-\p_0\|_2^2$. The regularizer is to guarantee that the generated adversarial distribution is not too far way from the empirical distribution. It implies that the adversarial distribution is from an ambiguity set as
\[\p\in \{\p:\D(\p||\p_0)\leq \epsilon\}\]
where $\epsilon$ is determined by $\lambda_e$.

Besides the example-level robustness, concept-level robustness is more important for learning the generic features. A desired model should perform consistently well over different concepts. Assuming that there are $K$ concepts in the training set and each concept consists of $N_k$ examples, the concept-robust optimization problem is
\[\min_{\theta}\max_k \{\frac{1}{N_k}\sum_i^{N_k}\ell(\x_i^k,y_i^k;\theta)\}\]

With the similar analysis as the example-level robustness and adopting the appropriate regularizer, the problem becomes
\begin{eqnarray}\label{eq:concept}
\min_{\theta}\max_{\q\in\R^{K};\q\in\Delta} \sum_k \frac{q_k}{N_k}\sum_i^{N_k}\ell(\x_i^k,y_i^k;\theta) -\lambda_c\D(\q||\q_0)
\end{eqnarray}
where $\q_0$ can be set as $q_0^k = N_k/N$.

Combined with the example-level robustness, the hierarchically robust optimization problem becomes
\begin{eqnarray*}
\min_{\theta}\max\limits_{\substack{\p\in\R^{N}; \p\in\Delta  \\ \q\in\R^{K}; \q\in\Delta}}&& \sum_k  \frac{q_k}{N_k}\sum_i^{N_k}p_i\ell(\x_i^k,y_i^k;\theta)\\
&& - \lambda_e \D(\p||\p_0) -\lambda_c\D(\q||\q_0)
\end{eqnarray*}
In this formulation, each example is associated with a parameter $p_i$ and $q_k$. Therefore, a high dimensionality with this coupling structure makes an efficient optimization challenging. Due to the fact that $K\ll N$, we decouple the hierarchical robustness with an alternative formulation for the example-level robustness as follows.

\subsection{Wasserstein Ambiguity Set}

In Eqn.~\ref{eq:expprob}, the ambiguity set is defined with the distance to the uniform distribution over the training set. It introduces the adversarial distribution by re-weighting each example, which couples the parameter with that of the concept-level problem. To simplify the optimization, we generate the ambiguity set for the adversarial distribution with Wasserstein distance~\cite{sinha2018certifiable}. The property of Wasserstein distance can help to decouple the example-level robustness from concept-level robustness.

Assume that $P$ is a data-generating distribution over the data space and $P_0$ is the empirical distribution from where the training set is generated as $\x \sim P_0$. The ambiguity set for the distribution $P$ can be defined as
\[\{P:W(P, P_0)\leq \epsilon\}\]
$W(P, P_0) = \inf_{M\in \Pi(P,P_0)}E_{M}[d(\hat{\x},\x)]$ is the Wasserstein distance between distributions~\cite{sinha2018certifiable} and we denote the example generated from $P$ as $\hat{\x}$. $d(\cdot,\cdot)$ is the transportation cost between examples.

The problem of example-level robustness can be written as
\[\min_{\theta} \max_{P}E_{P}[\ell(\hat{\x},y;\theta)] -\frac{\lambda_w}{2} W(P, P_0)\]
According to the definition of Wasserstein distance~\cite{sinha2018certifiable} and let the cost function be the squared Euclidean distance, the problem is equivalent to
\[\min_{\theta}\max_{\hat{\x}\in\X}\sum_i \ell(\hat{\x}_i,y_i;\theta) - \frac{\lambda_w}{2} \sum_i \|\hat{\x}_i - \x_i\|_F^2\]
where $\X$ is the data space. In \cite{sinha2018certifiable}, they obtain the optimal $\hat{\x}_i$ by solving the subproblem for each example at each iteration. To accelerate the optimization, we propose to minimize the upper bound of the subproblem, which also provides an insight for the comparison between regularization and augmentation.

The main theoretical results are stated in the following theorems and 
their proofs can be found in the appendix. First, we give the definition of smoothness as
\begin{definition}
A function $f$ is called $L_z$-smoothness in $z$ w.r.t. a norm $\|\cdot\|$ if there is a constant $L_z$ such that for any values of $z$ as $z'$ and $z''$, it holds that
\[f(z'')\leq f(z')+\langle \nabla f(z'),z''-z'\rangle+\frac{L_z}{2}\|z''-z'\|^2 \]
\end{definition}

\begin{thm}\label{thm:1}
Assuming $\ell(\cdot)$ is $L_{\x}$-smoothness in $\x$ and $\nabla_{\x} \ell$ is $L_{\theta}$-Lipschitz continuous for $\theta$, we have
\[\max_{\hat{\x}_i\in \X} \ell(\hat{\x}_i,y_i;\theta) - \frac{\lambda_w}{2} \|\hat{\x}_i - \x_i\|_F^2\leq \ell(\x_i,y_i;\theta)+\frac{\gamma}{2}\|\theta\|_F^2\]
where $\lambda_w$ is sufficiently large such that $\lambda_w>L_{\x}$ and $ \gamma = \frac{L_{\theta}^2}{\lambda_w -L_{\x}}$.
\end{thm}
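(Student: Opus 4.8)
The strategy is to bound the inner maximization over $\hat{\x}_i$ by expanding $\ell(\hat{\x}_i, y_i; \theta)$ around the given $\x_i$ using smoothness, then explicitly maximize the resulting upper bound, and finally control the gradient term $\nabla_{\x}\ell$ using the Lipschitz-in-$\theta$ assumption. Since the objective concerns a single example $i$, I would suppress the subscript and write $\x$ for $\x_i$, $\xh$ for $\hat{\x}_i$, $y$ for $y_i$. First I would apply the $L_{\x}$-smoothness of $\ell$ in $\x$ to get, for any $\xh \in \X$,
\[
\ell(\xh, y; \theta) \le \ell(\x, y; \theta) + \langle \nabla_{\x}\ell(\x, y; \theta), \xh - \x\rangle + \frac{L_{\x}}{2}\|\xh - \x\|_F^2.
\]
Subtracting $\frac{\lambda_w}{2}\|\xh - \x\|_F^2$ from both sides, the right-hand side becomes a quadratic in the displacement $\u := \xh - \x$, namely $\ell(\x,y;\theta) + \langle \nabla_{\x}\ell, \u\rangle - \frac{\lambda_w - L_{\x}}{2}\|\u\|_F^2$. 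Because $\lambda_w > L_{\x}$, this quadratic is strictly concave in $\u$, so maximizing over all $\u$ (a superset of the feasible displacements, which only strengthens the bound) is a completion-of-squares computation: the maximum is attained at $\u^\star = \frac{1}{\lambda_w - L_{\x}}\nabla_{\x}\ell(\x,y;\theta)$ with value $\ell(\x,y;\theta) + \frac{1}{2(\lambda_w - L_{\x})}\|\nabla_{\x}\ell(\x,y;\theta)\|_F^2$. Hence
\[
\max_{\xh\in\X} \ell(\xh, y;\theta) - \frac{\lambda_w}{2}\|\xh - \x\|_F^2 \le \ell(\x, y;\theta) + \frac{1}{2(\lambda_w - L_{\x})}\|\nabla_{\x}\ell(\x, y;\theta)\|_F^2.
\]

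The remaining task is to bound $\|\nabla_{\x}\ell(\x,y;\theta)\|_F^2$ by $L_{\theta}^2\|\theta\|_F^2$, which together with $\gamma = L_{\theta}^2/(\lambda_w - L_{\x})$ gives the claim. This is where the $L_{\theta}$-Lipschitz continuity of $\nabla_{\x}\ell$ in $\theta$ comes in: it yields $\|\nabla_{\x}\ell(\x,y;\theta) - \nabla_{\x}\ell(\x,y;\theta')\|_F \le L_{\theta}\|\theta - \theta'\|_F$. Taking $\theta' = 0$ (so that the network has zero weights) and invoking $\nabla_{\x}\ell(\x,y;0) = 0$ — which holds because with zero parameters the loss does not depend on the input, so its gradient in $\x$ vanishes — gives $\|\nabla_{\x}\ell(\x,y;\theta)\|_F \le L_{\theta}\|\theta\|_F$, hence $\|\nabla_{\x}\ell(\x,y;\theta)\|_F^2 \le L_{\theta}^2\|\theta\|_F^2$. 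Substituting into the previous display completes the proof.

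The main obstacle — or rather the only delicate point — is justifying the step $\nabla_{\x}\ell(\x,y;\theta')=0$ at a suitable reference $\theta'$. The clean choice $\theta'=0$ works for the standard architecture where the prediction is a function of $\theta$ composed with the features, and at $\theta'=0$ the output is a constant independent of $\x$; I would state this as the operative assumption (implicitly the same normalization used elsewhere in the paper). If one prefers not to assume $\nabla_{\x}\ell$ vanishes anywhere, an alternative is to absorb a constant $\|\nabla_{\x}\ell(\x,y;\theta')\|_F^2$ term into the bound, but this would not match the stated clean form $\frac{\gamma}{2}\|\theta\|_F^2$, so the zero-reference normalization is the intended route. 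Everything else is a routine quadratic maximization and the triangle-inequality use of Lipschitz continuity.
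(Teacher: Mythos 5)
Your proposal is correct and follows essentially the same route as the paper's proof: apply $L_{\x}$-smoothness, maximize the resulting concave quadratic in $\hat{\x}_i-\x_i$ to get the $\frac{1}{2(\lambda_w-L_{\x})}\|\nabla_{\x}\ell\|_F^2$ term, and then bound the gradient norm via $L_{\theta}$-Lipschitzness with the reference point $\theta=\mathbf{0}$ and the normalization $\nabla_{\x}\ell(\x;\mathbf{0})=0$, which the paper also invokes explicitly. Your direct triangle-inequality step is in fact slightly cleaner than the paper's detour through the $2(a^2+b^2)$ bound, and you correctly identify the vanishing-gradient-at-zero assumption as the one non-routine ingredient.
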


\begin{thm}\label{thm:2}
With the same assumption in Theorem~\ref{thm:1} and considering an additive augmentation with $\z$ for the original image
\[\tilde{\x}_i = \x_i+\tau \z_i\]
we have
\[\max_{\hat{\x}_i\in \X}  \ell(\hat{\x}_i,y_i;\theta) - \frac{\lambda_w}{2} \|\hat{\x}_i - \x_i\|_F^2\leq \ell(\tilde{\x}_i,y_i;\theta)+\frac{\gamma}{2}\|\theta\|_F^2 - \alpha\]
where 
\[\tau = \frac{\langle \nabla_{\x_i}\ell,\z_i\rangle}{3L_{\x}\|\z_i\|_F^2}\]
and $\alpha$ is a non-negative constant as
\[\alpha =  \frac{\lambda_w}{\lambda_w-L_{\x}}\frac{\langle \nabla_{\x_i}\ell,\z_i\rangle^2}{6L_{\x}\|\z_i\|_F^2}\]
\end{thm}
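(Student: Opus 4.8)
The plan is to reprise the proof of Theorem~\ref{thm:1}, but to carry out the $L_{\x}$-smoothness expansion of $\ell(\hat\x_i,y_i;\theta)$ around the augmented point $\xt_i=\x_i+\tau\z_i$ rather than around $\x_i$, so that the gain from augmenting emerges as an explicit negative term. I abbreviate $g=\nabla_{\x_i}\ell$ for the input-gradient at $\x_i$ and $\tilde g=\nabla_{\x}\ell(\xt_i,y_i;\theta)$ for the one at $\xt_i$, and I take $\langle g,\z_i\rangle\ge 0$ without loss of generality (replacing $\z_i$ by $-\z_i$ leaves $\xt_i$, $\tau$, and $\alpha$ unchanged), so that the prescribed step $\tau=\langle g,\z_i\rangle/(3L_{\x}\|\z_i\|_F^2)$ is nonnegative.

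First I would write $\hat\x_i-\x_i=(\hat\x_i-\xt_i)+\tau\z_i$, substitute it into $\|\hat\x_i-\x_i\|_F^2$, and bound $\ell(\hat\x_i,y_i;\theta)$ above by the smoothness inequality anchored at $\xt_i$. In the variable $v=\hat\x_i-\xt_i$ the resulting expression is a strictly concave quadratic, since its quadratic part is $-\frac{1}{2}(\lambda_w-L_{\x})\|v\|_F^2$ and $\lambda_w>L_{\x}$; hence the maximisation over $\hat\x_i$ closes in form, and the left-hand side of the claim is at most $\ell(\xt_i,y_i;\theta)+\|\tilde g-\lambda_w\tau\z_i\|_F^2/[2(\lambda_w-L_{\x})]-\frac{\lambda_w\tau^2}{2}\|\z_i\|_F^2$.

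Next I would expand $\|\tilde g-\lambda_w\tau\z_i\|_F^2=\|\tilde g\|_F^2-2\lambda_w\tau\langle\tilde g,\z_i\rangle+\lambda_w^2\tau^2\|\z_i\|_F^2$ and dispose of the two $\tilde g$ terms. For $\|\tilde g\|_F^2$ I would argue as in Theorem~\ref{thm:1}: the $L_{\theta}$-Lipschitzness of $\nabla_{\x}\ell$ in $\theta$ yields $\|\tilde g\|_F\le L_{\theta}\|\theta\|_F$, which after division by $2(\lambda_w-L_{\x})$ reproduces the term $\frac{\gamma}{2}\|\theta\|_F^2$ with $\gamma=L_{\theta}^2/(\lambda_w-L_{\x})$. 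For the inner product I would invoke the $L_{\x}$-smoothness of $\ell$ in $\x$ to compare the gradients at $\x_i$ and $\xt_i$, giving $\langle\tilde g,\z_i\rangle\ge\langle g,\z_i\rangle-L_{\x}\tau\|\z_i\|_F^2$ and hence $-2\lambda_w\tau\langle\tilde g,\z_i\rangle\le-2\lambda_w\tau\langle g,\z_i\rangle+2\lambda_w L_{\x}\tau^2\|\z_i\|_F^2$. Substituting both estimates and collecting the $\tau^2\|\z_i\|_F^2$ contributions — with coefficients $\lambda_w^2$ and $2\lambda_w L_{\x}$ from the expanded square and $-\lambda_w(\lambda_w-L_{\x})$ from the standalone $-\frac{\lambda_w\tau^2}{2}\|\z_i\|_F^2$, collapsing to $3\lambda_w L_{\x}$ — leaves, for every $\tau\ge 0$, the bound $\ell(\xt_i,y_i;\theta)+\frac{\gamma}{2}\|\theta\|_F^2+\frac{\lambda_w}{\lambda_w-L_{\x}}\big(\frac{3L_{\x}}{2}\|\z_i\|_F^2\tau^2-\langle g,\z_i\rangle\tau\big)$ on the left-hand side of the claim.

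Finally I would note that the scalar quadratic in $\tau$ inside the parentheses is minimised exactly at $\tau=\langle g,\z_i\rangle/(3L_{\x}\|\z_i\|_F^2)$ — the value in the statement — with minimum $-\langle g,\z_i\rangle^2/(6L_{\x}\|\z_i\|_F^2)$; multiplying by $\lambda_w/(\lambda_w-L_{\x})>0$ turns it into exactly $-\alpha$, and $\alpha\ge 0$ because $\lambda_w>L_{\x}$ and $\|\z_i\|_F^2>0$. The hard part will be the bookkeeping when disposing of the two $\tilde g$ terms: one must track which point each gradient is evaluated at, compare $\nabla_{\x_i}\ell$ with the gradient at $\xt_i$ carefully through the smoothness hypothesis (this comparison, together with the $\lambda_w\tau\z_i$ shift introduced by anchoring the expansion at $\xt_i$, is the entire source of the improvement $-\alpha$ over Theorem~\ref{thm:1}), and verify that the several $\tau^2$ coefficients combine to exactly the factor $3$ and the prefactor $\lambda_w/(\lambda_w-L_{\x})$. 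Everything else is a direct re-use of the proof of Theorem~\ref{thm:1}.
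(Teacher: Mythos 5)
Your proposal is correct and follows essentially the same route as the paper's proof: anchor the smoothness expansion at $\tilde{\x}_i$, complete the square over the perturbation variable, bound $\|\nabla_{\tilde{\x}_i}\ell\|_F$ via the $L_{\theta}$-Lipschitz hypothesis and $\langle\nabla_{\tilde{\x}_i}\ell-\nabla_{\x_i}\ell,\z_i\rangle$ via the $L_{\x}$-smoothness, then minimize the resulting scalar quadratic in $\tau$. The only cosmetic difference is that you complete the square in $\hat{\x}_i-\tilde{\x}_i$ whereas the paper does so in $\hat{\x}_i-\x_i$; both collapse to the identical term $\frac{\lambda_w}{\lambda_w-L_{\x}}\bigl(\frac{3L_{\x}\|\z_i\|_F^2}{2}\tau^2-\langle\nabla_{\x_i}\ell,\z_i\rangle\tau\bigr)$ and hence to the same $\tau$ and $\alpha$.
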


Theorem~\ref{thm:1} shows that learning the model using the original examples with a regularization on the complexity of the model, e.g., weight decay with $\gamma$, can make the learned model robust for examples from the ambiguity set. A similar result has been observed in the conventional robust optimization~\cite{ben2009robust}. However, the regularization is not sufficient to train good enough DNNs and many optimization algorithms have to rely on augmented examples to obtain models with better generalization performance. 

Theorem~\ref{thm:2} interprets the phenomenon by analyzing a specific augmentation that adds a patch $\z$ to the original image and shows that augmented examples can provide a tighter bound for the loss of the examples in the ambiguity set. Besides, the augmented patch $\z_i$ is corresponding to the gradient of the original example $\x_i$. To make the approximation tight, it should be identical to the direction of the gradient. So we set $\z_i = \frac{\nabla_{\x_i}\ell}{\|\nabla_{\x_i}\ell\|_F}$, which is similar to that in adversarial training~\cite{GoodfellowSS14}.

Combining with the concept-level robustness in Eqn.~\ref{eq:concept}, we have the final objective for learning the hierarchically robust representations as
\begin{eqnarray}\label{eq:problem}
\min_{\theta} \max\limits_{\q\in\R^{K}; \q\in\Delta} &&\LL(\q,\theta)=\sum_k\frac{q_k}{N_k}\sum_i\ell(\tilde{\x}_i^k,y_i^k;\theta) \nonumber\\
&&+\frac{\gamma}{2}\|\theta\|_F^2-\frac{\lambda}{2}\|\q-\q_0\|_2^2
\end{eqnarray}

\subsection{Efficient Optimization}
The problem in Eqn.~\ref{eq:problem} can be solved efficiently by stochastic gradient descent (SGD). In the standard training pipeline for ERM in Eqn.~\ref{eq:erm}, a mini-batch of examples are randomly sampled at each iteration and the model is updated with gradient descent as
\[\theta_{t+1} = \theta_t-\eta_{\theta}\frac{1}{m}\sum_i^m\nabla_{\theta}\ell(\x_i,y_i;\theta_t) \]
where $m$ is the size of a mini-batch.

For the problem in Eqn.~\ref{eq:problem}, each example has a weight as $q_k/N_k$ and the gradient has to be weighted for an unbiased estimation as
\begin{eqnarray}\label{eq:model}
\theta_{t+1} = \theta_t - \eta_{\theta}(\frac{1}{m}\sum_i^m \frac{N }{N_k}q_k\nabla_{\theta}\ell(\tilde{\x}_i^{k},y_i^{k};\theta_t)+\gamma\theta_t)
\end{eqnarray}

For the adversarial distribution $\q$, each concept has a weight $q_k$ and the straightforward way is to sample a mini-batch of examples from each concept to estimate the gradient of the distribution. However, the number of concepts varies and it can be larger than the size of a mini-batch. Besides, it results in the different sampling strategies for computing the gradient of deep models and the adversarial distribution, which increases the complexity of the training system. To address the issue, we take the same random sampling pipeline and update the distribution with weighted gradient ascent as
\begin{align}\label{eq:update}
&\hat{q}_k^{t+1} = q_k^t+\eta_q^t\big(\frac{1}{m}\sum_j^{m_k} \frac{N}{N_k}\ell(\tilde{\x}_j^{k},y_j^{k};\theta_t)-\lambda(q_k^t-q_{0}^k)\big)\nonumber\\
&\q_{t+1} = \P_{\Delta}(\hat{\q}_{t+1})
\end{align}
where $m_k$ is the number of examples from the $k$-th concept in the mini-batch and $\sum_k m_k=m$. $\P_{\Delta}(\cdot)$ projects the vector onto the simplex as in \cite{DuchiSSC08}.

The re-weighting strategy makes the gradient unbiased but introduces the additional variance. Since batch-normalization~\cite{IoffeS15} is inapplicable for the parameters of the adversarial distribution that is from the simplex, we develop a learning strategy to reduce the variance from gradients.

First, to illustrate the issue, let $\delta_1$ and $\delta_2$ be two binary random variables as
\[\Pr\{\delta_1=1\} = \frac{1}{N_k};\quad \Pr\{\delta_2=1\} = \frac{1}{N}\]
Obviously, we have $E[\delta_1] = \frac{1}{N_k};\quad E[\frac{N\delta_2}{N_k}] = \frac{1}{N_k}$.
It demonstrates that the gradient after re-weighting is unbiased. However, the variance can be different as
\[\mathrm{Var}[\delta_1] = \frac{1}{N_k} - \frac{1}{N_k^2};\quad \mathrm{Var}[\frac{N\delta_2}{N_k}] = \frac{N}{N_k^2} - \frac{1}{N_k^2}\]
where the variance is roughly increased by a factor of $N/N_k$.

By investigating the updating criterion in Eqn.~\ref{eq:update}, we find that the gradient is rescaled by the learning rate $\eta_q^t$. If we let $\eta_q^t=\mathcal{O}(\frac{1}{t})$, the norm of the gradient will be limited after a sufficient number of iterations. Besides, for any distribution $\q'$, the norm of $\|\q-\q'\|_2^2$ is bounded by a small value of $2$ since the distribution is from the simplex. It inspires us to deal with the first several iterations by adopting a small learning rate. The algorithm is summarized in Alg.~\ref{alg:1}. In short, we use the learning rate as $\eta_t = \frac{1}{c \lambda t}$ where $c>1$ for the first $s$ iterations and then the conventional learning rate $\eta_t = \frac{1}{\lambda t}$ is applied.

\begin{algorithm}[h]
\caption{Hierarchically Robust Representation Learning (HRRL)}
\begin{algorithmic}[1]
\STATE {\bf Input:} Dataset $\{\x_i,y_i\}$, iterations $T$, mini-batch size $m$, $\lambda$, $\gamma$, $\tau$, $s$, $c$
\FOR{$t = 1,\cdots,T$}
\IF{$t<=s$}
\STATE $\eta_q^t = \frac{1}{c\lambda t}$
\ELSE
\STATE $\eta_q^t = \frac{1}{\lambda t}$
\ENDIF
\STATE Sample a mini-batch of examples $\{\x_i,y_i\}_{i=1,\dots,m}$
\STATE Generate the augmented data as $\tilde{\x}_i = \x_i+\tau \z_i$
\STATE Update model with gradient descent as in Eqn.~\ref{eq:model}
\STATE Update distribution with gradient ascent as in Eqn.~\ref{eq:update}
\ENDFOR
\RETURN A feature extraction function $f(\cdot)$ from $\theta_T$
\end{algorithmic}\label{alg:1}
\end{algorithm}

The convergence about the adversarial distribution is stated as follows.
\begin{thm}\label{thm:converge}
Assume the gradient of the adversarial distribution $\q$ is bounded as $\forall t, \|g_q^t\|_2\leq \mu$ and set the learning rate as
\[\eta_q^t =\left\{\begin{array}{ll} \frac{1}{c\lambda t}&t\leq s\\\frac{1}{\lambda t}&o.w.\end{array}\right.\]
We have
\begin{align*}
&\max_{\q^*\in\Delta}\frac{1}{T}\sum_t^T E[\LL(\q^*,\theta_t) - \LL(\q_t,\theta_t)]\\
& \leq \frac{1}{T}(\frac{\mu^2}{2\lambda}(\log(T)+1) - \beta)
\end{align*}
where $\beta$ is a non-negative constant as $\beta = (\mu\sqrt{\frac{\log(s)}{2\lambda}} - \sqrt{s\lambda})^2$
and
$c = \frac{\mu}{\lambda}\sqrt{\frac{\log(s)}{2s}}$ should be larger than $1$.
\end{thm}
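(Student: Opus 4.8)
The plan is to read Eqn.~\ref{eq:problem} as an online strongly-concave maximization in $\q$: at round $t$ the inner player, reacting to the current $\theta_t$, faces the function $\LL(\cdot,\theta_t)$ and responds with the projected stochastic gradient-ascent step of Eqn.~\ref{eq:update}. First I would record the structural fact that, for every fixed $\theta$, the map $\q\mapsto\LL(\q,\theta)$ is $\lambda$-strongly concave, since $\LL$ is affine in $\q$ apart from the term $-\frac{\lambda}{2}\|\q-\q_0\|_2^2$ and $\tilde{\x}_i$ does not depend on $\q$. Strong concavity then gives, for any fixed $\q^*\in\Delta$,
\[\LL(\q^*,\theta_t)-\LL(\q_t,\theta_t)\le\langle\nabla_\q\LL(\q_t,\theta_t),\q^*-\q_t\rangle-\frac{\lambda}{2}\|\q^*-\q_t\|_2^2.\]

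Next I would bring in the update rule. The simplex projection $\P_\Delta$ is nonexpansive, so from $\q_{t+1}=\P_\Delta(\q_t+\eta_q^t g_q^t)$ one gets $\|\q_{t+1}-\q^*\|_2^2\le\|\q_t-\q^*\|_2^2+2\eta_q^t\langle g_q^t,\q_t-\q^*\rangle+(\eta_q^t)^2\|g_q^t\|_2^2$; using $\|g_q^t\|_2\le\mu$ and rearranging bounds $\langle g_q^t,\q^*-\q_t\rangle$ by $\frac{\|\q_t-\q^*\|_2^2-\|\q_{t+1}-\q^*\|_2^2}{2\eta_q^t}+\frac{\eta_q^t\mu^2}{2}$. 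Because the re-weighted mini-batch vector $g_q^t$ of Eqn.~\ref{eq:update} is a conditionally unbiased estimate of $\nabla_\q\LL(\q_t,\theta_t)$ (uniform mini-batch sampling makes the $\frac{N}{N_k}$-scaled per-concept loss average unbiased, and the $-\lambda(\q_t-\q_0)$ part is exact), I can take expectations and, writing $D_t=E[\|\q_t-\q^*\|_2^2]$, combine the two displays into the per-round bound
\[E[\LL(\q^*,\theta_t)-\LL(\q_t,\theta_t)]\le\frac{D_t-D_{t+1}}{2\eta_q^t}+\frac{\eta_q^t\mu^2}{2}-\frac{\lambda}{2}D_t.\]

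The heart of the proof is to sum this over $t=1,\dots,T$ with the two-phase step size and collect terms, splitting the sum at $t=s$. On the tail $t>s$, where $\eta_q^t=\frac1{\lambda t}$, the per-round quantity $\frac{\lambda(t-1)}{2}D_t-\frac{\lambda t}{2}D_{t+1}$ telescopes, leaving $\frac{\lambda s}{2}D_{s+1}-\frac{\lambda T}{2}D_{T+1}+\frac{\mu^2}{2\lambda}\sum_{t=s+1}^{T}\frac1t$. On the head $t\le s$, where $\eta_q^t=\frac1{c\lambda t}$, the analogous computation no longer telescopes cleanly and leaves a residual $\frac{(c-1)\lambda}{2}\sum_{t=1}^{s}D_t-\frac{c\lambda s}{2}D_{s+1}+\frac{\mu^2}{2c\lambda}\sum_{t=1}^{s}\frac1t$. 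Adding the two pieces, the $D_{T+1}$ term and (after combining the two $D_{s+1}$ contributions, which needs $c>1$) the $D_{s+1}$ term are nonpositive and are dropped, the remaining $\sum_{t\le s}D_t$ is bounded by $2s$ via the simplex-diameter estimate $\|\q-\q'\|_2^2\le2$, while $\sum_{t\le T}\frac1t\le\log T+1$ and $\sum_{t\le s}\frac1t\ge\log s$. This gives
\[\sum_{t=1}^{T}E[\LL(\q^*,\theta_t)-\LL(\q_t,\theta_t)]\le\frac{\mu^2}{2\lambda}(\log T+1)+(c-1)\lambda s-\frac{\mu^2(c-1)}{2c\lambda}\log s.\]

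Finally I would substitute the prescribed $c=\frac{\mu}{\lambda}\sqrt{\frac{\log s}{2s}}$ and verify the algebraic identity $(c-1)\lambda s-\frac{\mu^2(c-1)}{2c\lambda}\log s=-\beta$: writing $A=\mu\sqrt{\frac{\log s}{2\lambda}}$ and $B=\sqrt{s\lambda}$ one checks $c=A/B$, so the bracket $\lambda s-\frac{\mu^2\log s}{2c\lambda}$ equals $B^2-AB=B(B-A)$ and the whole residual equals $\frac{A-B}{B}\cdot B(B-A)=-(A-B)^2=-\beta$. Dividing by $T$ and noting the right-hand side does not depend on $\q^*$, so it also bounds the maximum over $\Delta$, yields the claimed $\mathcal{O}(\log T/T)$ bound. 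The steps I expect to be delicate are (i) justifying the conditional unbiasedness of the re-weighted gradient in Eqn.~\ref{eq:update} carefully enough that the strong-concavity inequality survives taking expectations, and (ii) the bookkeeping of the two-phase telescoping — in particular checking that the leftover head terms, once $c$ is plugged in, recombine into exactly $-\beta$ rather than merely something of order $\log s$; the remainder is the textbook projected-SGD analysis for a strongly concave objective.
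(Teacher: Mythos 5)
Your proposal is correct and follows essentially the same route as the paper's proof: the standard projected-SGD regret bound for a $\lambda$-strongly concave objective, summed with the two-phase step size split at $t=s$, with $\sum_{t\le s}E[\|\q_t-\q^*\|_2^2]\le 2s$ and the choice of $c$ collapsing the head residual into exactly $-\beta$. If anything, your bookkeeping is slightly more careful than the paper's, since you explicitly combine the leftover $+\tfrac{\lambda s}{2}D_{s+1}$ from the tail telescoping with the $-\tfrac{c\lambda s}{2}D_{s+1}$ from the head (using $c>1$) before dropping it, a step the paper glosses over.
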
 

Theorem~\ref{thm:converge} shows a $\mathcal{O}(\log(T)/T)$ convergence rate for the adversarial distribution. The gain of the adaptive learning rate is indicated in $\beta$, that is, a larger $\beta$ provides a better convergence. When applying the conventional learning rate i.e. $c=1$, it is easy to show $\beta = 0$. To further investigate the properties of $\beta$, we let $h(s) = \mu\sqrt{\log(s)/(2\lambda)} - \sqrt{s\lambda}$, i.e., $\beta = h(s)^2$, and study its behavior. 

\begin{prop}
$h(s)$ is non-negative.
\end{prop}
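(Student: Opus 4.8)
The plan is to reduce the proposition to the standing requirement $c>1$ stated just above it, by exhibiting $h(s)$ as a strictly positive multiple of $c-1$.

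The first step is to re-express the leading term of $h(s)$ in terms of $c$. Using the definition $c=\frac{\mu}{\lambda}\sqrt{\frac{\log(s)}{2s}}$ and combining radicals (all quantities under the roots are non-negative for $s\ge 1$, $\lambda>0$), one checks that
\[\mu\sqrt{\frac{\log(s)}{2\lambda}}=\frac{\mu}{\lambda}\sqrt{\frac{\log(s)}{2s}}\,\sqrt{s\lambda}=c\,\sqrt{s\lambda}.\]
I would verify this by squaring both sides, which reduces it to the trivial identity $\frac{\mu^2\log(s)}{2\lambda}=\frac{\mu^2}{\lambda^2}\frac{\log(s)}{2s}\cdot s\lambda$.

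The second step is to substitute this into the definition of $h$, giving the factorization
\[h(s)=c\,\sqrt{s\lambda}-\sqrt{s\lambda}=(c-1)\,\sqrt{s\lambda}.\]
Since $s\ge 1$ and $\lambda>0$, the factor $\sqrt{s\lambda}$ is strictly positive, so $h(s)$ has the same sign as $c-1$; invoking the hypothesis $c>1$ yields $h(s)>0\ge 0$, as claimed. This also makes $\beta=h(s)^2=(c-1)^2 s\lambda$ explicit, clarifying the earlier remark that a larger $c$, i.e.\ a smaller initial step size, gives a larger $\beta$ and hence a sharper bound in Theorem~\ref{thm:converge}.

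I do not expect a genuine obstacle: the argument is a one-line algebraic factorization. The only point requiring care is the implicit domain of $h$. Viewed as a bare function of $s$, $h$ is not globally non-negative --- for instance $h(1)=-\sqrt{\lambda}<0$, and $h(s)\to-\infty$ as $s\to\infty$ --- so the statement is really about the values of $s$ for which the adaptive schedule is actually applied, namely those with $c>1$. Spelling out that such $s$ exist (which forces $\tfrac{\log s}{s}\ge \tfrac{2\lambda^2}{\mu^2}$, hence the feasibility condition $\tfrac{2\lambda^2}{\mu^2}\le \tfrac1e$) is the only piece of bookkeeping worth including; the inequality $h(s)\ge 0$ itself is immediate once the factorization is in place.
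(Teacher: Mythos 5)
Your proof is correct and takes essentially the same route as the paper's: both reduce the claim to the standing hypothesis $c>1$, the paper by rewriting it as $\mu>\lambda\sqrt{2s/\log(s)}$ and substituting into $h(s)$, you by the equivalent identity $\mu\sqrt{\log(s)/(2\lambda)}=c\sqrt{s\lambda}$. Your factorization $h(s)=(c-1)\sqrt{s\lambda}$ is a marginally sharper packaging of the same algebra, since it also yields $\beta=(c-1)^2 s\lambda$ exactly, but the underlying argument is identical.
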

\begin{proof}
Since $c = \frac{\mu}{\lambda}\sqrt{\frac{\log(s)}{2s}}>1$, we have $\mu >\lambda\sqrt{\frac{2s}{\log(s)}}$. Therefore
\[h(s) = \mu\sqrt{\frac{\log(s)}{2\lambda}} - \sqrt{s\lambda}>\sqrt{s\lambda} - \sqrt{s\lambda}=0\]
\end{proof}

It implies that we can benefit from the variance reduction as long as the variance $\mu$ is sufficiently large. Then, we fix $\lambda=1$ and plot the curve of $h(s)$ when varying $\mu$ in Fig.~\ref{fig:curve}. We can find that $h(s)$ achieves its maximum after thousands of iterations, which suggests that $s$ should not be too large. It is consistent with our claim that the gradient will be shrunk by the learning rate and the additional variance has little influence when $t$ is large.

\begin{figure}[!ht]
\centering
\includegraphics[width= 2.2in ]{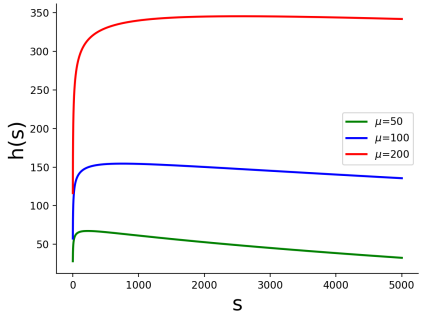}
\caption{Curves of $h(s)$ with different $\mu$'s.}\label{fig:curve}
\end{figure}

\section{Experiments}
\label{sec:exp}

We adopt ImageNet ILSVRC12~\cite{ILSVRC15} as the benchmark data set to learn models for generic feature extraction in the experiments. ImageNet includes $1,000$ classes, where each class has about $1,200$ images in training and $50$ images in test. We summarize these $1,000$ classes into $11$ concepts according to the structure of WordNet~\cite{fellbaum1998} that is the default class structure of ImageNet. The statistics of the concepts and classes are summarized in Table~\ref{tab:info}. Apparently, ImageNet is biased to specific animals. For example, it contains $59$ classes of birds and more than $100$ classes of dogs. This bias can result in the performance degeneration when applying the model learned by ERM to generate representations for different tasks. 

\begin{table}[!ht]
\centering
\begin{tabular}{|c||c|c|c|c|c|c|}
\hline
Concept&An&Ar&B&C&De&Do \\\hline
\#Classes&121&107&59&56&129&118\\\hline
Concepts&I&M&S&V&O&\\\hline
\#Classes&106&100&57&67&80&\\\hline
\end{tabular}
\caption{Concepts in ImageNet. The initials ``An'', ``Ar'', ``B'', ``C'', ``De'', ``Do'', ``I'', ``M'', ``S'', ``V'', ``O'' denote ``Animal'', ``Artifact'', ``Bird'', ``Container'', ``Device'', ``Dog'', ``Instrumentality'', ``Mammal'', ``Structure'', ``Vehicle'', ``Others'', respectively.}\label{tab:info}
\end{table}

We apply ResNet-18~\cite{HeZRS16}, which is a popular network as the feature extractor~\cite{mormont2018comparison}, to learn the representations. We train the model with stochastic gradient descent (SGD) on 2 GPUs. Following the common practice~\cite{HeZRS16}, we learn the model with $90$ epochs and set the size of mini-batch as $256$. The initial learning rate is set to $0.1$, and then it is decayed by a factor of $10$ at $\{30,60\}$. The weight decay is $10^{-4}$ and the momentum in SGD is $0.9$. All model training includes random crop and horizontal flipping as the data augmentation. We set $s=1000$ as suggested by Fig.~\ref{fig:curve} for the proposed algorithm. For the setting of $c$, we calculate the variance for $\mu$ from several mini-batches and set $c=10$ according to Theorem~\ref{thm:converge}. After obtaining deep models, we extract deep features from the layer before the last fully-connected layer, which generates a $512$-dimensional feature for a single image. Given the features, we learn a linear SVM~\cite{CC01a} to categorize examples. $\tau$, $\lambda$ and the parameter of SVM are searched in $\{10^i\} (i=-3,\ldots,1)$. Four different deep features with SVM as follows are compared in the experiments, where SVM$_{\mathrm{ERM}}$ is the conventional way to extract features with models trained by ERM and the others are our proposals.
\begin{compactitem}
\item SVM$_{\mathrm{ERM}}$: deep features learned with ERM.
\item SVM$_{\mathrm{EL}}$: deep features learned with example-level robustness only.
\item SVM$_{\mathrm{CL}}$: deep features learned with concept-level robustness only.
\item SVM$_{\mathrm{HRRL}}$: deep features learned with both example-level and concept-level robustness.
\end{compactitem}
Experiments are repeated $3$ times and the average results with standard deviation are reported. 

\subsection{CIFAR-10}
First, we study the scenario when example-level distribution difference exits between the target task and the benchmark data set. We conduct experiments on CIFAR-10~\cite{Krizhevsky2009Learning}, which contains $10$ classes and $60,000$ images. $50,000$ of them are for training and the rest are for test. CIFAR-10 has the similar concepts as those in ImageNet, e.g., ``bird'', ``dog'', and the difference in concepts is negligible. On the contrary, each image in CIFAR-10 has a size of $32\times32$, which is significantly smaller than that of images in ImageNet. As shown in Fig.~\ref{fig:example}, the example-level distribution changes dramatically and the example-level robustness is important for this task.

Table~\ref{tab:cifar} summarizes the comparison. First, we observe that the accuracy of SVM$_{\mathrm{ERM}}$ can achieve $85.77\%$, which surpasses the performance of SIFT features~\cite{BoRF10}, i.e., $65.6\%$, by more than $20\%$. It confirms that representations extracted from a DNN model trained on the benchmark data set can be applicable for generic tasks. Compared with representations from the model learned with ERM, SVM$_{\mathrm{EL}}$ outperforms it by a margin about $1\%$. It shows that optimizing with Wasserstein ambiguity set can learn the example-level robust features and handle the difference in examples better than ERM. SVM$_{\mathrm{CL}}$ has the similar performance as SVM$_{\mathrm{ERM}}$. It is consistent with the fact that the difference of concepts between CIFAR-10 and ImageNet is small. Finally, the performance of SVM$_{\mathrm{HRRL}}$ is comparable to that of SVM$_{\mathrm{EL}}$ due to negligible concept-level distribution difference but it is significantly better than SVM$_{\mathrm{ERM}}$, which demonstrates the effectiveness of the proposed algorithm.

\begin{table}[!ht]
\centering
\begin{tabular}{|l||c|}
\hline
Methods&Acc(mean$\pm$std) \\\hline
SVM$_{\mathrm{ERM}}$&85.77$\pm$0.12  \\\hline
SVM$_{\mathrm{EL}}$&86.62$\pm$0.18\\\hline
SVM$_{\mathrm{CL}}$&85.64$\pm$0.26\\\hline
SVM$_{\mathrm{HRRL}}$&86.49$\pm$0.19\\\hline
\end{tabular}
\caption{Comparison of accuracy ($\%$) on CIFAR-10.}\label{tab:cifar}
\end{table}

\subsection{Stanford Online Products (SOP)}
In this subsection, we demonstrate the importance of concept-level robustness. We have Stanford Online Products (SOP)~\cite{songCVPR16} as the target task to evaluate the learned representations. SOP collects product images from eBay.com and consists of $59,551$ images for training and $60,502$ images for test. We adopt the super class label for each image, which leads to a $12$-class classification problem. As shown in Fig.~\ref{fig:example}, we can find that the example-level distribution difference is not significant (e.g., resolution), while the distribution of concepts (i.e., concept-level distribution) is relatively different. ImageNet includes many natural objects, e.g., animals, while SOP only contains artificial ones. Handling the difference in concepts is challenging for this task.

Table~\ref{tab:sop} shows the performance comparison. Apparently, SVM$_{\mathrm{EL}}$ has the similar performance as SVM$_{\mathrm{ERM}}$ due to the minor changes in the example-level distribution. However, SVM$_{\mathrm{CL}}$ demonstrates a better accuracy, which is about $1\%$ better than SVM$_{\mathrm{ERM}}$. It demonstrates that the deep features learned with the proposed algorithm is more robust than those from ERM when the distribution of concepts varies. Besides, the performances of SVM$_{\mathrm{HRRL}}$ and SVM$_{\mathrm{CL}}$ are comparable, which confirms that deep features obtained with hierarchical robustness work well consistently in different scenarios.

\begin{table}[!ht]
\centering
\begin{tabular}{|l||c|}
\hline
Methods&Acc(mean$\pm$std) \\\hline
SVM$_{\mathrm{ERM}}$&73.47$\pm$0.09  \\\hline
SVM$_{\mathrm{EL}}$&73.48$\pm$0.08\\\hline
SVM$_{\mathrm{CL}}$&74.34$\pm$0.05\\\hline
SVM$_{\mathrm{HRRL}}$&74.23$\pm$0.08\\\hline
\end{tabular}
\caption{Comparison of accuracy ($\%$) on SOP.}\label{tab:sop}
\end{table}

\subsection{Street View House Numbers (SVHN)}

Finally, we deal with a task when both example-level and concept-level distribution differences exist. We evaluate the robustness of deep features on Street View House Numbers (SVHN)~\cite{netzer2011reading} data set. It consists of $73,257$ images for training and $26,032$ for test. The target is to identify one of $10$ digits from each $32\times 32$ image. The image has the same size as CIFAR-10, which is very different from ImageNet. Moreover, SVHN has the concepts of digits, which is also different from ImageNet.

We compare the different deep features in Table~\ref{tab:svhn}. First, as observed in CIFAR-10, SVM$_{\mathrm{EL}}$ outperforms SVM$_{\mathrm{ERM}}$ by a large margin. It is because features learned with example-level robustness is more applicable than those from ERM when examples are from a different distribution. Second, SVM$_{\mathrm{CL}}$ improves the performance by more than $2\%$. It is consistent with the observation in SOP, where features learned with concept-level robustness perform better when concepts vary. Besides, we can observe that the performance of SVM$_{\mathrm{CL}}$ surpasses that of SVM$_{\mathrm{EL}}$. It implies that controlling concept-level robustness, which has not been investigated sufficiently, may be more important than example-level robustness for representation learning. Finally, by combining example-level and concept-level robustness, SVM$_{\mathrm{HRRL}}$ shows an improvement of more than $4\%$. It demonstrates that example-level and concept-level robustness are complementary. Incorporating both of them can further improve the performance of deep features, when the example-level and concept-level distributions are different from these of the benchmark data set.

\begin{table}[!ht]
\centering
\begin{tabular}{|l||c|}
\hline
Methods&Acc(mean$\pm$std) \\\hline
SVM$_{\mathrm{ERM}}$&63.23$\pm$0.35  \\\hline
SVM$_{\mathrm{EL}}$&65.01$\pm$0.37\\\hline
SVM$_{\mathrm{CL}}$&65.47$\pm$0.27\\\hline
SVM$_{\mathrm{HRRL}}$&67.33$\pm$0.39\\\hline
\end{tabular}
\caption{Comparison of accuracy ($\%$) on SVHN.}\label{tab:svhn}
\end{table}

\begin{figure*}[!ht]
\centering
\begin{minipage}{2in} 
\centering
\includegraphics[height= 1.4 in ]{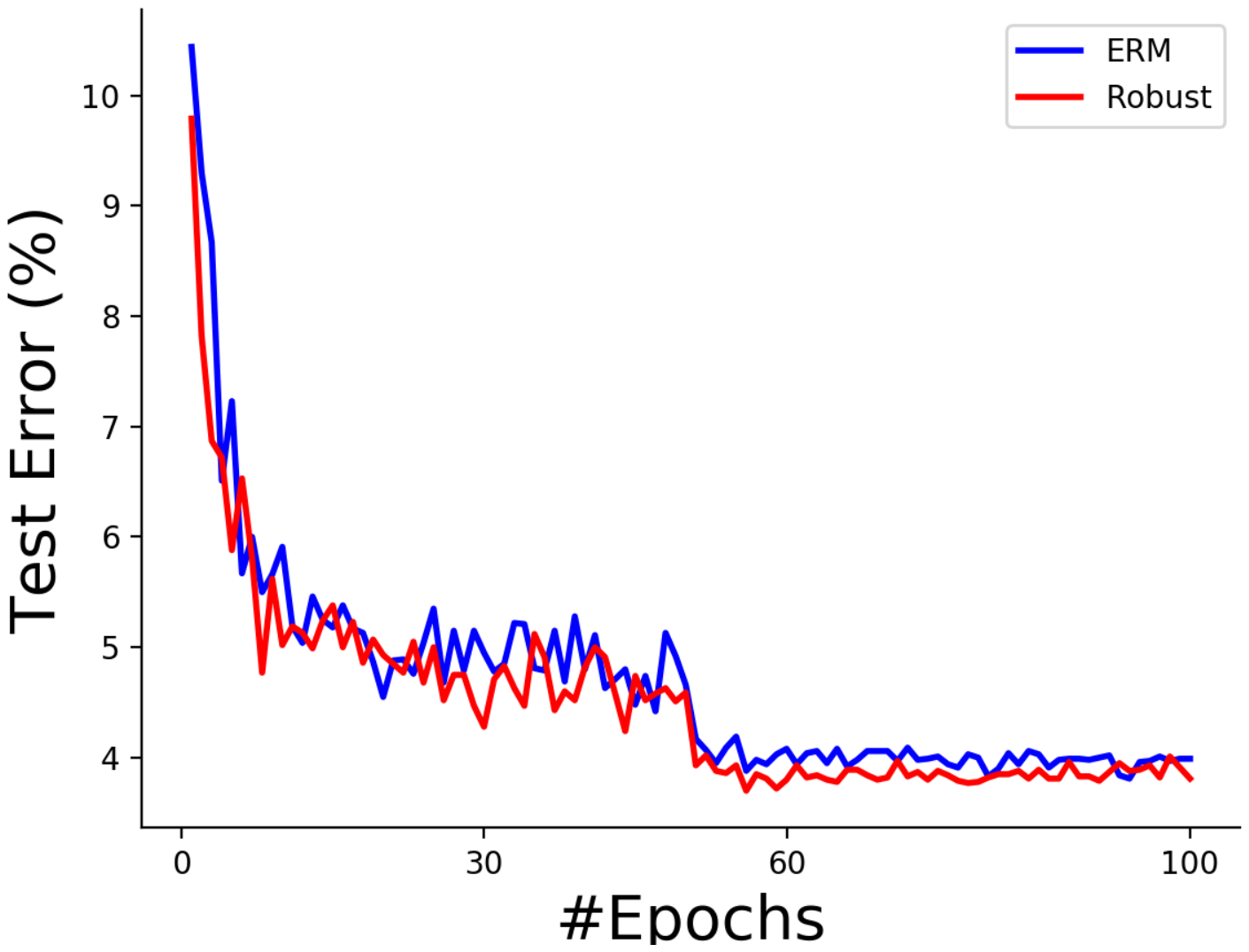}
\mbox{\footnotesize (a) CIFAR-10}
\end{minipage}
\begin{minipage}{2in} 
\centering
\includegraphics[height= 1.4in ]{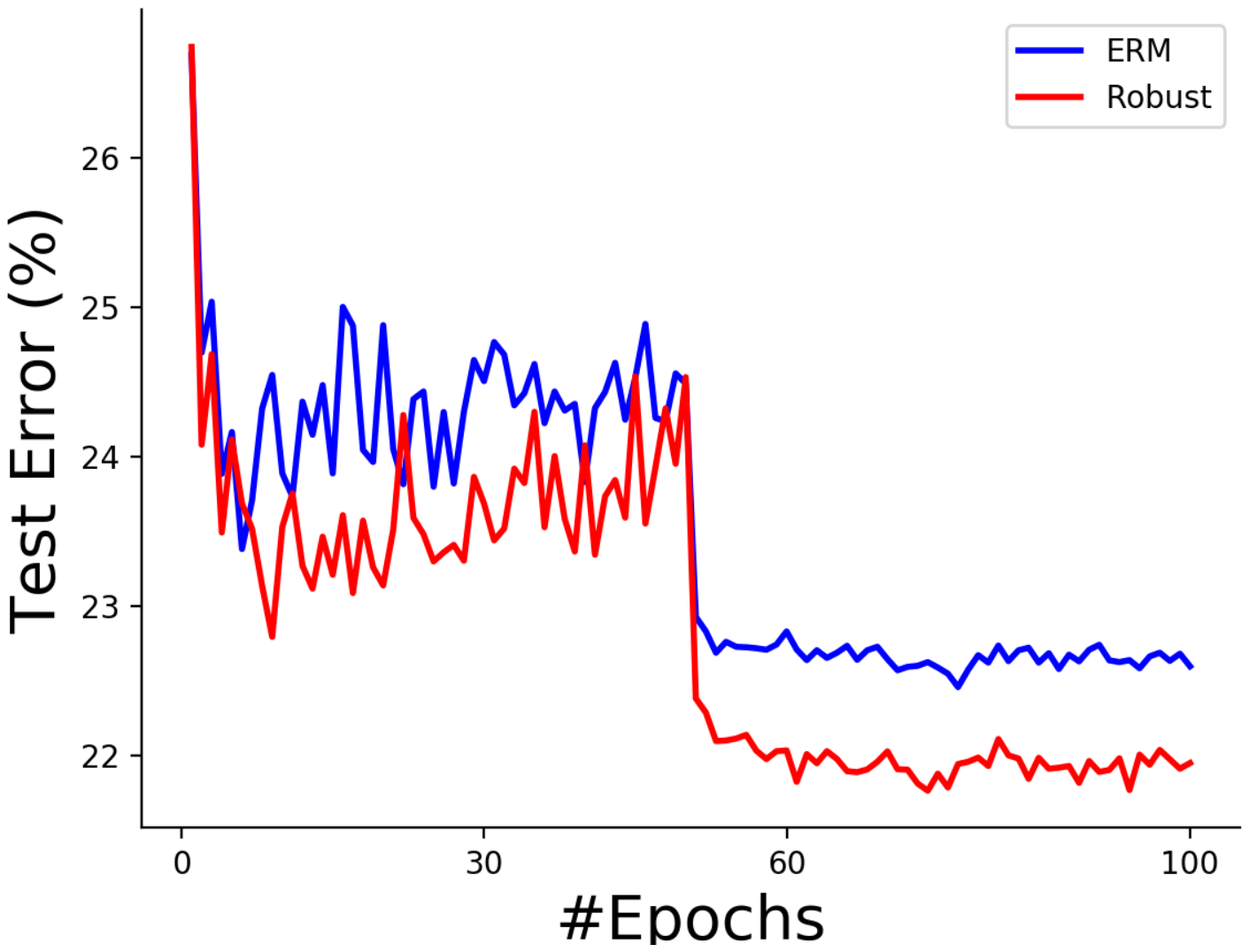}
\mbox{\footnotesize (b) SOP}
\end{minipage}
\begin{minipage}{2in} 
\centering
\includegraphics[height= 1.4in ]{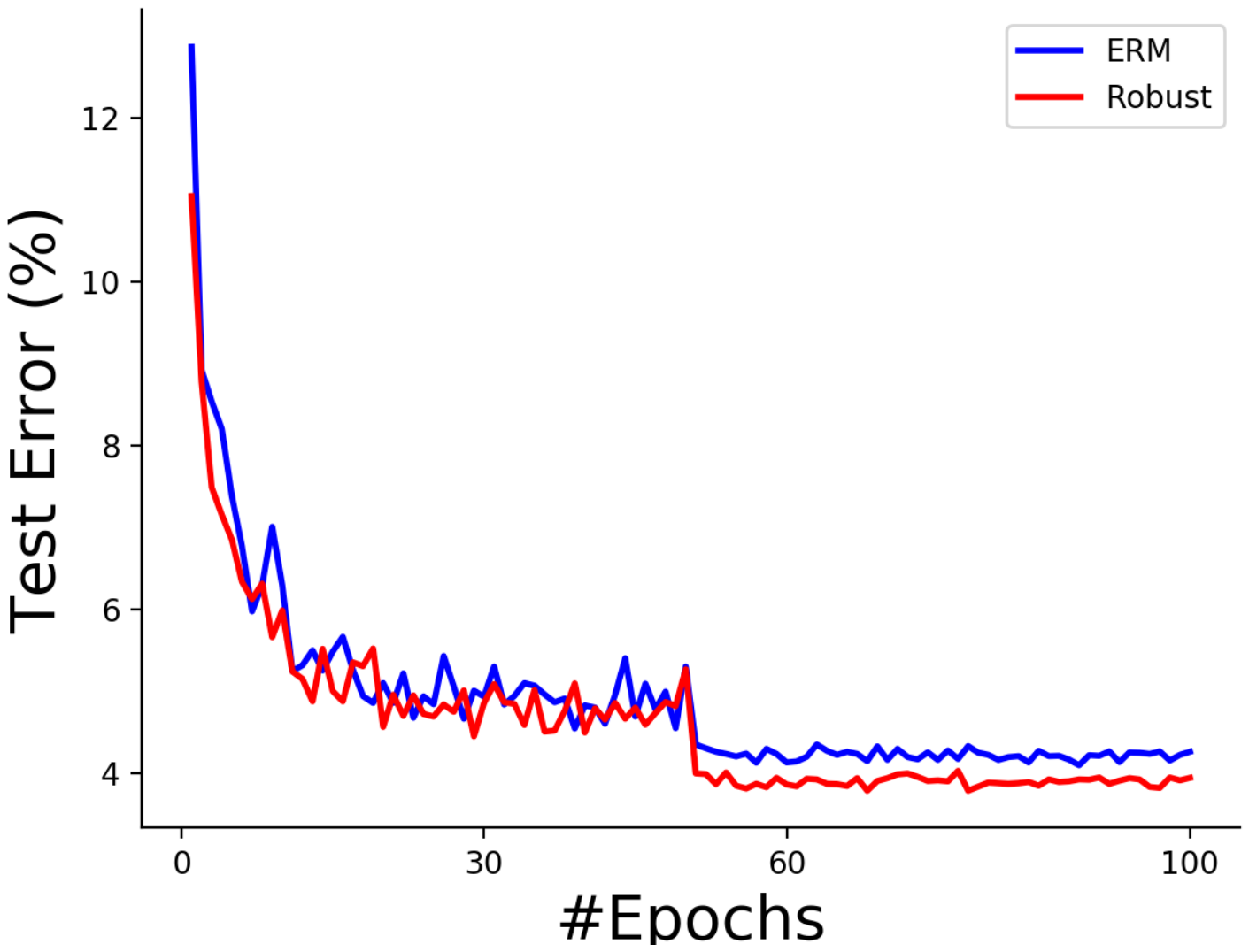}
\mbox{\footnotesize (c) SVHN}
\end{minipage}
\caption{Comparison of fine-tuning with different initializations.}\label{fig:ft}
\end{figure*}

\subsection{Fine-tuning}

Besides extracting features, a pre-trained model is often applied as an initialization for training DNNs on the target task when GPUs are available. Since initialization is crucial for the final performance of DNNs~\cite{SutskeverMDH13}, we conduct the experiments that initialize the model with parameters trained on ImageNet and then fine-tune the model on CIFAR-10, SOP and SVHN. After initialization, the model is fine-tuned with $100$ epochs, where ERM is adopted as the objective for each task. The learning rate is set as $0.01$ and decayed once by a factor of $10$ after $50$ epochs. Fig.~\ref{fig:ft} illustrates the curve of test error. We let ``ERM'' denote the model initialized with that pre-trained with ERM and ``Robust'' denote the one initialized with the model pre-trained with the proposed algorithm. Surprisingly, we observe that the models initialized with the proposed algorithm still surpass those with ERM. It implies that the learned robust models can be used for initialization besides feature extraction.

\subsection{Effect of Robustness}
Finally, we investigate the effect of the proposed method on ImageNet task itself to further illustrate the impact of robustness. First, we demonstrate the results of example-level robustness. We generate the augmented examples for validation set as in Theorem~\ref{thm:2} and report the accuracy of different models in Fig.~\ref{fig:el}. The horizontal axis shows the step size for generating the augmented examples. When step size is $0$, the original validation set is used for evaluation. Otherwise, each image in the validation set is modified with the corresponding step size, and only modified images are used for evaluation. Intuitively, larger step size implies larger example-level distribution change compared to the original ImageNet data set.

\begin{figure}[!ht]
\centering
\includegraphics[height= 1.55in ]{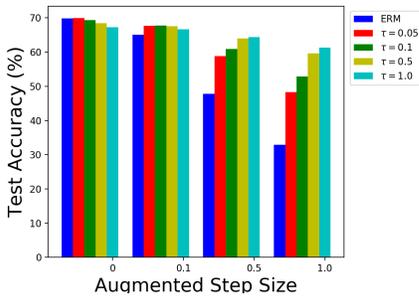}
\caption{Comparison of accuracy on augmented examples.}\label{fig:el}
\end{figure}

Besides ERM, four different models are included in the comparison. Each model is trained with the example-level robustness and the corresponding parameter $\tau$ is denoted in the legend, where larger $\tau$ should theoretically provide a more robust model.

We can observe that ERM performs well when there is no augmentation but its performance degrades significantly when the augmentation step size increases. It confirms that ERM cannot generalize well when the example-level distribution changes. Fortunately, we can observe that more robust models (i.e., $\tau$ increases) can provide better generalization performance as expected. It is because that the proposed algorithm focuses on optimizing the worst-case performance among different distributions derived from the original distribution.

Second, we show the influence of concept-level robustness. We train models with different $\lambda$ for regularization and summarize the accuracy of concepts in Fig.~\ref{fig:cl}. We sort the accuracy in ascending order to make the comparison clear. As illustrated, ERM aims to optimize the uniform distribution of examples and ignores the distribution of concepts. Consequently, certain concept, e.g., ``bird'', has much higher accuracy than others. When decreasing $\lambda$ in our proposed method, the freedom of adversarial distribution increases. With more freedom, the proposed method will focus on the concepts with bad performance. By optimizing the adversarial distribution, the model will balance the performance between different concepts as illustrated in Fig.~\ref{fig:cl}.

\begin{figure}[!ht]
\centering
\includegraphics[height= 1.55in ]{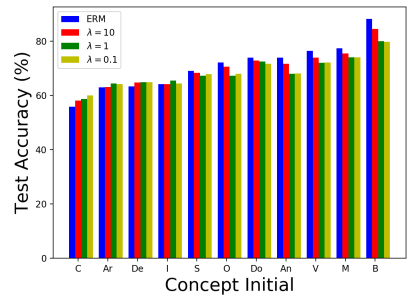}
\caption{Comparison of accuracy on concepts in ImageNet.}\label{fig:cl}
\end{figure}

In summary, Figs.~\ref{fig:el} and \ref{fig:cl} demonstrate the different influences of example-level and concept-level robustness. Evidently, our method can deal with the perturbation from different aspects. It further confirms that improving the hierarchical robustness is important for applying deep features or initializing models in real-world applications.

\section{Conclusion}
\label{sec:conclude}
In this work, we study the problem of learning deep features using a benchmark data set for generic tasks. We propose a hierarchically robust optimization algorithm to learn robust representations from a large-scale benchmark data set. The theoretical analysis also demonstrates the importance of augmentation when training DNNs. The experiments on real-world data sets demonstrate the effectiveness of the learned features from the proposed method. The framework can be further improved when side information is available. For example, given the concepts of the target domain, we can obtain the specific reference distribution $\q_0$ accordingly, and then learn the features for the desired task. This direction can be our future work.

\bibliographystyle{ieee_fullname}
\bibliography{rb_20}

\appendix

\section{Proof of Theorem~1}
\begin{proof}
Due to the smoothness, we have
\[\ell(\hat{\x}_i,y_i;\theta) \leq \ell(\x_i,y_i;\theta)+\langle \nabla_{\x_i}\ell,\hat{\x}_i-\x_i\rangle+\frac{L_{\x}}{2}\|\hat{\x}_i - \x_i\|_F^2\]
So
\begin{align*}
&\ell(\hat{\x}_i,y_i;\theta) - \frac{\lambda_w}{2} \|\hat{\x}_i - \x_i\|_F^2 \leq \ell(\x_i,y_i;\theta)+\langle \nabla_{\x_i}\ell,\hat{\x}_i-\x_i\rangle\\
&-\frac{\lambda_w-L_{\x}}{2}\|\hat{\x}_i - \x_i\|_F^2
\end{align*}
When $\lambda_w$ is sufficiently large as $\lambda_w>L_{\x}$, R.H.S. is bounded and
\begin{eqnarray*}
&&\ell(\hat{\x}_i,y_i;\theta) - \frac{\lambda_w}{2} \|\hat{\x}_i - \x_i\|_F^2 \\
&&\leq \ell(\x_i,y_i;\theta)+\frac{1}{2(\lambda_w - L_{\x})}\|\nabla_{\x_i}\ell\|_F^2
\end{eqnarray*}

Since $\nabla_{\x}\ell(\cdot)$ is $L_{\theta}$-Lipschitz continuous, we have
\begin{eqnarray*}
\|\nabla_{\x}\ell(\x;\theta)\|_F^2&\leq& 2\|\nabla_{\x}\ell(\x;\theta)-\nabla_{\x}\ell(\x;\mathbf{0})\|_F^2\\
&&+2\|\nabla_{\x}\ell(\x;\mathbf{0})\|_F^2\\
&\leq& 2L_{\theta}^2\|\theta\|_F^2+2\|\nabla_{\x}\ell(\x;\mathbf{0})\|_F^2
\end{eqnarray*}
Note that $\|\nabla_{\x}\ell(\x;\mathbf{0})\|_F=0$ in many convolutional neural networks. The bound can be improved and the original subproblem can be bounded as
\[\max_{\hat{\x}_i\in\X}\ell(\hat{\x}_i,y_i;\theta) - \frac{\lambda_w}{2} \|\hat{\x}_i - \x_i\|_F^2 \leq \ell(\x_i,y_i;\theta) +\frac{\gamma}{2}\|\theta\|_F^2 \]
where $\gamma = \frac{L_{\theta}^2}{\lambda_w -L_{\x}}$.
\end{proof}

\section{Proof of Theorem~2}
\begin{proof}
We consider the augmented examples as
\[\tilde{\x}_i = \x_i+\tau \z_i\]
According to the smoothness, we have
\begin{align*}
&\ell(\hat{\x}_i,y_i;\theta) - \frac{\lambda_w}{2}\|\hat{\x}_i-\x_i\|^2\leq \ell(\tilde{\x}_i,y_i;\theta) +\langle\nabla_{\tilde{\x}_i}\ell, \hat{\x}_i - \tilde{\x}_i \rangle\\
&+\frac{L_{\x}}{2}\|\hat{\x}_i - \tilde{\x}_i\|- \frac{\lambda_w}{2}\|\hat{\x}_i-\x_i\|^2\\
&= \ell(\tilde{\x}_i,y_i;\theta)+\langle\nabla_{\tilde{\x}_i}\ell - \tau L_{\x} \z_i, \hat{\x}_i - \x_i \rangle-\tau \langle\nabla_{\tilde{\x}_i}\ell, \z_i \rangle\\
&+\frac{L_{\x}\tau^2}{2}\|\z_i\|^2-\frac{\lambda_w - L_{\x}}{2}\|\hat{\x}_i - \x_i\|^2\\
&\leq \ell(\tilde{\x}_i,y_i;\theta)+\frac{\|\nabla_{\tilde{\x}_i}\ell-\tau L_{\x}\z_i \|_F^2}{2(\lambda_w - L_{\x})}- \tau\langle \nabla_{\tilde{\x}_i}\ell,\z_i\rangle\\
&+\frac{L_{\x}\tau^2}{2}\|\z_i\|_F^2\\
&= \ell(\tilde{\x}_i,y_i;\theta)+\frac{\|\nabla_{\tilde{\x}_i}\ell\|_F^2}{2(\lambda_w-L_{\x})} \\
&+\frac{\lambda_w}{\lambda_w-L_{\x}}(\frac{\tau^2L_{\x}\|\z_i\|_F^2}{2} - \tau \langle \nabla_{\tilde{\x}_i}\ell,\z_i\rangle)\\
&\leq\ell(\tilde{\x}_i,y_i;\theta)+\frac{\gamma}{2}\|\theta\|_F^2\\
&+\frac{\lambda_w}{\lambda_w-L_{\x}}(\frac{\tau^2L_{\x}\|\z_i\|_F^2}{2} - \tau \langle \nabla_{\tilde{\x}_i}\ell - \nabla_{\x_i}\ell,\z_i\rangle - \tau \langle \nabla_{\x_i}\ell,\z_i\rangle)\\
&\leq \ell(\tilde{\x}_i,y_i;\theta)+\frac{\gamma}{2}\|\theta\|_F^2 \\
&+\frac{\lambda_w}{\lambda_w-L_{\x}}(\frac{\tau^2L_{\x}\|\z_i\|_F^2}{2} + \tau \| \nabla_{\tilde{\x}_i}\ell - \nabla_{\x_i}\ell\|\|\z_i\| - \tau \langle \nabla_{\x_i}\ell,\z_i\rangle)\\
&\leq \ell(\tilde{\x}_i,y_i;\theta)+\frac{\gamma}{2}\|\theta\|_F^2 \\
&+\frac{\lambda_w}{\lambda_w-L_{\x}}(\frac{3\tau^2L_{\x}\|\z_i\|_F^2}{2} - \tau \langle \nabla_{\x_i}\ell,\z_i\rangle)\\
&=\ell(\tilde{\x}_i,y_i;\theta)+\frac{\gamma}{2}\|\theta\|_F^2 - \frac{\lambda_w}{\lambda_w-L_{\x}}\frac{\langle \nabla_{\x_i}\ell,\z_i\rangle^2}{6L_{\x}\|\z_i\|_F^2}
\end{align*}
The last equation is from setting $\tau$ to optimum as
\[\tau = \frac{\langle \nabla_{\x_i}\ell,\z_i\rangle}{3L_{\x}\|\z_i\|_F^2}\]
\end{proof}

\section{Proof of Theorem~3}
\begin{proof}
For an arbitrary distribution $\q$, we have
\begin{align*}
&E[\|\q_{t+1}-\q\|_2^2] = E[\|\P_{\Delta}(\q_t+\eta_t g_t) - \q\|_2^2]\\
&\leq E[\|\q_t+\eta_tg_t - \q\|_2^2]\\
&=E[\|\q_t-\q\|_2^2+2\eta_t (\q_t-\q)^\top g_t+\eta_t^2\|g_t\|_2^2]\\
&\leq E[\|\q_t-\q\|_2^2+\eta_t^2\mu^2\\
&+2\eta_t(\LL(\q_t,\theta_t) - \LL(\q,\theta_t) -\frac{\lambda}{2}\|\q_t-\q\|_2^2)]\\
\end{align*}
The last inequality is from the fact that the objective is $\lambda$-strongly concave in $\q$ and the observed gradient is unbiased.
Therefore, we have
\begin{align*}
& E[\LL(\q,\theta_t) - \LL(\q_t,\theta_t)]\leq \frac{ E[\|\q_t-\q\|_2^2] -  E[\|\q_{t+1}-\q\|_2^2]}{2\eta_t} \\
&-\frac{\lambda}{2}\|\q_t-\q\|_2^2+\frac{\eta_t}{2}\mu^2
\end{align*}
When $\eta_t = \frac{1}{\lambda t}$, we have
\begin{align*}
& E[\LL(\q,\theta_t) - \LL(\q_t,\theta_t)]\leq \frac{\lambda t}{2} (E[\|\q_t-\q\|_2^2] -  E[\|\q_{t+1}-\q\|_2^2]) \\
&-\frac{\lambda}{2}\|\q_t-\q\|_2^2+\frac{1}{2\lambda t}\mu^2
\end{align*}
When $\eta_t = \frac{1}{\lambda t c}$, we have
\begin{align*}
& E[\LL(\q,\theta_t) - \LL(\q_t,\theta_t)]\leq \frac{\lambda t c}{2} (E[\|\q_t-\q\|_2^2] -  E[\|\q_{t+1}-\q\|_2^2]) \\
&-\frac{\lambda}{2}\|\q_t-\q\|_2^2+\frac{1}{2\lambda t c}\mu^2
\end{align*}

We assume that $\eta_t = \frac{1}{c\lambda t}$ and $c>1$ for the first $s$ iterations and then $\eta_t = \frac{1}{\lambda t}$. So we have
\begin{align*}
&\sum_t^T E[\LL(\q,\theta_t) - \LL(\q_t,\theta_t)] = \sum_{t=1}^s E[\LL(\q,\theta_t) - \LL(\q_t,\theta_t)]\\
&+\sum_{t=s+1}^TE[\LL(\q,\theta_t) - \LL(\q_t,\theta_t)]\\
&\leq \sum_{t=1}^s \big((\frac{c\lambda}{2} - \frac{\lambda}{2})E[\|\q_t-\q\|_2^2]+\frac{1}{2\lambda tc }\mu^2\big)+\sum_{t=s+1}^T \frac{1}{2\lambda t}\mu^2\\
&\leq s\lambda(c-1)+\frac{\mu^2}{2\lambda}\log(s) (\frac{1}{c}-1)+\frac{\mu^2}{2\lambda}(\log(T)+1)
\end{align*}
By setting $c = \frac{\mu}{\lambda}\sqrt{\frac{\log(s)}{2s}}$ and $\q$ to be optimum, we have
\begin{align*}
&\max_{\q^*\in\Delta}\sum_t^T E[\LL(\q^*,\theta_t) - \LL(\q_t,\theta_t)] \leq \mu\sqrt{2s\log(s)} - s\lambda \\
&- \frac{\mu^2\log(s)}{2\lambda}+\frac{\mu^2}{2\lambda}(\log(T)+1)\\
&= \frac{\mu^2}{2\lambda}(\log(T)+1) - (\mu\sqrt{\frac{\log(s)}{2\lambda}} - \sqrt{s\lambda})^2
\end{align*}

\end{proof}

\end{document}